\newtheorem{thm}{Theorem}
\title{Formal Verification of Autonomous Vehicle Platooning}
\author[1]{Maryam Kamali}
\author[1]{Louise A. Dennis}
\author[1]{Owen McAree}
\author[2]{Michael Fisher}
\author[2]{Sandor M. Veres}
\affil[1]{Department of Computer Science, University of Liverool, UK}
\affil[2]{Department of Mechanical Engineering, University of Sheffield, UK}
\begin{document}

\maketitle

\begin{abstract}
The coordination of multiple autonomous vehicles into convoys or
platoons is expected on our highways in the near future. However, before
such platoons can be deployed, the new autonomous behaviours of the
vehicles in these platoons must be certified. An appropriate
representation for vehicle platooning is as a multi-agent system in
which each agent captures the ``autonomous decisions'' carried out by
each vehicle. In order to ensure that these autonomous decision-making
agents in vehicle platoons never violate safety requirements, we use
formal verification. However, as the formal verification technique
used to verify the agent code does not scale to the full system and as
the global verification technique does not capture the essential
verification of autonomous behaviour, we use a combination of the two
approaches. This mixed strategy allows us to verify safety
requirements not only of a model of the system, but of the actual
agent code used to program the autonomous vehicles.
\end{abstract}


\section{Introduction}
\label{Section:Introduction}

While ``driverless cars'' regularly appear in the media, they are neither
``driverless'' nor fully autonomous. Legal constraints ensure that
there must always be a responsible human in the vehicle. However,
although fully autonomous road vehicles remain futuristic, the automotive
industry is working on what are variously called ``road trains'',
``car convoys'' or ``vehicle platoons''.  Here, each vehicle
autonomously follows the one in front of it on the road, with the
front vehicle in the platoon/convoy/train being driven manually. This
technology is being introduced by the automotive industry in order to
improve both the safety and efficiency of vehicles on very congested
roads~\cite{PlatoonLegislation}. It is especially useful if the vehicles are trucks/lorries and
if the road is a multi-lane highway.

In these platoons, each vehicle clearly needs to communicate with
others, at least with the one immediately in front and the one
immediately behind.  Vehicle-to-vehicle (\textsc{V2V}) communication
is used at a lower (continuous control system) level to adjust each
vehicle's position in the lanes and the spacing between the
vehicles. \textsc{V2V} is also used at higher levels, for example to
communicate joining requests, leaving requests, or commands dissolving
the platoon. So a traditional approach is to implement the software
for each vehicle in terms of hybrid (and hierarchical) control systems
and to analyse this using hybrid systems techniques.

However, as the behaviours and requirements of these automotive platoons
become more complex there is a move towards much greater autonomy
within each vehicle. Although the human in the vehicle is still
responsible, the autonomous control deals with much of the complex
negotiation to allow other vehicles to leave and join, etc. 
Traditional approaches involve hybrid automata~\cite{Henzinger1996} in
which the continuous aspects are encapsulated within discrete states,
while discrete behaviours are expressed as transitions between these
states. A drawback of combining discrete decision-making and
continuous control within a hybrid automaton is that it is difficult
to separate the two (high-level decision-making and continuous
control) concerns. In addition, the representation of the high-level
decision-making can become unnecessarily complex.

As is increasingly common within autonomous systems, we use a hybrid
autonomous systems architecture where not only is the discrete
decision-making component separated from the continuous control
system, but the behaviour of the discrete part is described in much
more detail. In particular, the \emph{agent} paradigm is
used~\cite{Wooldridge02:book}. This style of architecture, using the
agent paradigm, not only improves the system design from an
engineering perspective but also facilitates the system analysis and
verification. Indeed, we use this architecture for actually
implementing automotive platoons, and we here aim to analyse the
system by verification.

Safety certification is an inevitable concern in the development of
more autonomous road vehicles, and verifying the safety and
reliability of automotive platooning is currently one of the main
challenges faced by the automotive industry. The verification of such
systems is challenging due to their complex and hybrid nature.
Separating discrete and continuous concerns, as above, potentially
allows us to reason about the decision-making components in isolation
and ensure that no decision-making component ever deliberately chooses
an unsafe state. However, the use of the `agent' concept alone is not 
enough for our purposes, since this can still make its autonomous
decisions in an `opaque' way. In order to be able to reason about, and
formally verify, the choices the system makes, we need a
\emph{rational agent}~\cite{WooldridgeRao99:book}. This not only makes
decisions, but has explicit representations of the \emph{reasons} for
making them, allowing us to describe not only what the autonomous
system chooses to do, but \emph{why} it makes particular
choices~\cite{CACM13}.





The Belief-Desire-Intention (BDI) model is one of the most widely used
conceptual models not only for describing rational agents but for actually
implementing them~\cite{rao:92a}. A BDI-style agent is characterised
by its beliefs, desires and intentions: \emph{beliefs} represent the
agent's views about the world; \emph{desires} represent the objectives
to be accomplished; while \emph{intentions} are the set of tasks
currently undertaken by the agent to achieve its desires. A BDI-style
agent has a set of plans, determining how an agent acts based on its
beliefs and goals, and an event queue where events (perceptions from
the environment and internal subgoals) are stored.  In this paper, we
use the \textsc{Gwendolen} programming language~\cite{GWENDOLEN2008},
developed for verifiable BDI-style programming, to implement
agent-based decision-making for an automotive platoon. This captures
the high-level, autonomous decision-making within each vehicle.

As part of safety certification, we need to verify the agent
decisions, especially in combination with the other vehicles. An
autonomous rational agent makes decisions about what actions to
perform, etc, based on the \emph{beliefs}, \emph{goals} and
\emph{intentions} that the agent holds at that time. We use a
model-checking approach to demonstrate that the rational agent always
behaves in line with the platoon requirements and never deliberately
chooses options that end up in unsafe states. We verify properties of
the rational agent code using the AJPF
model-checker~\cite{Dennis2012}, one of the very few model-checkers
able to cope with complex properties of BDI agents. Unfortunately,
there are two drawbacks to using AJPF: currently, AJPF does not
support verification of timed behaviours; and AJPF is resource heavy
and cannot be used to verify the whole system. Consequently, in this
paper, we use a combined methodology for the verification of
automotive platooning. To evaluate timing behaviour, we use a
timed-automata abstraction and verify the system using the Uppaal
model-checker; to evaluate autonomous decisions, we apply AJPF to the
individual agents together with an abstraction of the other
vehicles/agents. Furthermore, we describe how these two approaches to
modeling, i.e. BDI models and timed-automata, can be combined to
provide an appropriate basis for verifying the behaviour of both
individual agents and the whole system.

The remainder of the paper is organised as follows. In
Section~\ref{Section:Automotive platoon} the automotive platoon and
platoon requirements are presented. In
Section~\ref{Section:Agent-based Development of Automotive Platoon}
the hybrid agent architecture and the agent-based decision-making for
automotive platoon are described. In
Section~\ref{Section:Verification} the analysis and verification of
an automotive platoon is considered. Finally, in
Section~\ref{Section:Conclusions}, concluding remarks are provided and
future work is discussed.




\section{Automotive Platoons}
\label{Section:Automotive platoon}

An automotive platoon, enabling road vehicles to travel as a group, is
led by a vehicle which is driven by a professional driver
\cite{Shladover07, konvio2008, statreProject}. The following vehicles,
i.e, members of the platoon, are controlled autonomously. These
vehicles, equipped with low-level longitudinal (controlling speed) and
lateral (controlling steering) control systems, travel in a platoon
with pre-defined gaps between them. In addition, \textsc{V2V}
communication also connects the vehicles at an agent level.
The lead
vehicle, via its agent, effectively carries out coordination over the
platoon: setting parameters, creating certificates of joining and
leaving, etc. Each individual vehicle observes its environment and
follows incoming commands from the lead agent. In what follows, we outline
the set of high-level automotive platoon concepts and procedures
including how to join and leave a platoon~\cite{bergenhem2010}. In
addition, the initial requirements on these procedures for the
development of safe and reliable platooning are explained. From these
we derive the formal properties to be verified.

\subsection{Joining the Platoon} 
A vehicle can join a platoon either at the end or in the middle with
different control strategies being used.  The joining procedure is as
follows:

\begin{itemize}
\itemsep=0pt
\item a non-member vehicle sends a joining request to the platoon
  leader, expressing the intended position in the platoon;
\item if the vehicle has requested to join from the rear, the leader
  sends back an agreement provided the maximum platoon length has not
  been reached and the platoon is currently in normal operation;
\item if the vehicle requests to join in front of (for example)
  vehicle X and the maximum platoon length has not been reached, the
  leader sends an ``increase space'' command to vehicle X, and when
  the leader is informed that enough spacing has been created
  (approx. 17 metres), it sends back an agreement to the joining
  vehicle;
\item upon receipt of an agreement, the joining vehicle changes its
  lane (changing lane is a manual procedure which is 
  performed by a driver);
\item once the vehicle is in the correct lane, its automatic speed
  controller is enabled and it approaches the preceding vehicle;
\item when the vehicle is close enough to the preceding vehicle (less 
than 20 metres), its automatic steering controller is enabled and it 
sends an acknowledgement to the leader; and, finally
\item the leader sends a ``decrease space'' command to vehicle X, 
and when the leader is informed that spacing has been back to normal 
(approximately 5 metres), it replies to the acknowledgement.
\end{itemize}
In order to ensure a safe joining operation, the following requirements
should be preserved within the agent-based decision-making components of
automotive platoon.
%

\begin{enumerate}
\itemsep=0pt
\item A vehicle must only initiate joining a platoon, i.e., changing lane, 
once it has received confirmation from the leader.
\item Before autonomous control is enabled, a joining vehicle must
  approach the preceding vehicle, in the correct lane.
\item Automatic steering controller must only be enabled once the joining vehicle
  is sufficiently close to the preceding vehicle.
\end{enumerate}

%

\subsection{Leaving the Platoon} 
A vehicle can request to leave platoon at any time. The leaving
procedure is:

\begin{itemize}
\itemsep=0pt
\item a platoon member sends a leaving request to the leader and waits
  for authorisation;
\item upon receipt of `leave' authorisation, the vehicle increases its
  space from the preceding vehicle;
\item when maximum spacing has been achieved, the vehicle switches
  both its speed and steering controller to `manual' and changes its
  lane; and, finally
\item the vehicle sends an acknowledgement to the leader.
\end{itemize}
%

\noindent The two following requirements are necessary in order to meet with the
agent-based decision-making components of automotive platoon.
%
\begin{enumerate}
\itemsep=0pt
\item Except in emergency cases, a vehicle must not leave the platoon
  without authorisation from the leader.
\item When authorised to leave, autonomous control should not be
  disabled until the maximum allowable platoon spacing has been
  achieved.
\end{enumerate}

%



\section{Agent-based Development of Automotive Platoon}
\label{Section:Agent-based Development of Automotive Platoon}
We employ a hybrid agent architecture based on~\cite{Dennis2010} for
each vehicle:
\begin{center}
\includegraphics[width=0.895\textwidth]{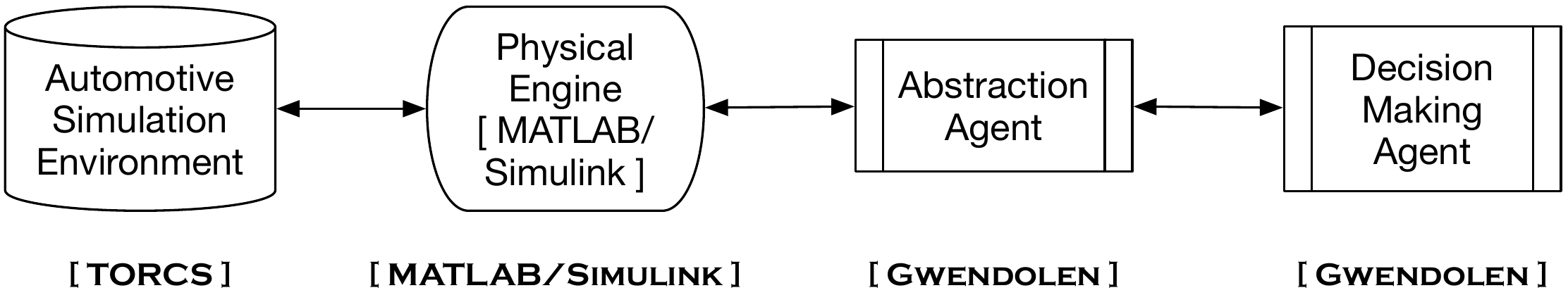}
\end{center}

%
\noindent Real-time continuous control of the vehicle is managed by feedback
controllers, implemented in MATLAB, and observing the environment
through its sensory input. This is called the \emph{Physical
  Engine}. The Physical Engine, in turn, communicates with an
\emph{Abstraction Agent} that extracts discrete information from 
streams of continuous data and passes this on a \emph{Decision-Making Agent}. 
The Decision-Making Agent is a rational agent which directs the 
Physical Engine by passing it instructions through the Abstraction Agent. 
Instructions from the Decision-Making Agent to the Abstraction Agent are 
interpreted into meaningful instructions for Physical Engine.

To provide the complex environment necessary for effective simulation
and testing, we use an automotive simulator, TORCS~\cite{TORCS}, to
implement the environment component of the architecture. The Physical
Engine is implemented in MATLAB, while both Abstraction and
Decision-Making Agents are programmed in the \textsc{Gwendolen}
programming language. An interface between TORCS and MATLAB/Simulink
has been developed that provides a means to control vehicles from
MATLAB and Simulink.

\begin{figure}[htbp]
\vspace*{-2mm}
\begin{small}\begin{center}\begin{tabular}{| c |  c | l |} 
			\hline perform goal & +!g [perform] & adding a new goal to perform some actions \\
			\hline achievement & +!g [achieve] & adding a new goal and continuously attempting the  \\
			goal & & plans associated with the goal until the agent has  \\
				& & acquired the belief \emph{g}\\
			\hline	believe & +b & adding a new belief \emph{b} \\
			\hline	disbelieve & -b & removing a belief \emph{b} \\
			\hline  on-hold believe & *b & waiting for belief \emph{b} to be true \\
			\hline  plan & trigger : guard 
			$ \leftarrow $ body & \emph{trigger} is the addition of a goal or a belief; \emph{guard} gives\\
			 & & conditions under which the plan can be executed; \emph{body} \\
			 & & is a stack of actions that should be performed \\
			
			\hline guard condition & B x & checking if belief x is perceivable \\
			\hline guard condition & G x & checking if goal x has been added \\
			\hline action & perf(x) & action x causing the execution of x 		\\	\hline

\end{tabular}
\end{center}
\end{small}
\caption{\textsc{Gwendolen}~\cite{GWENDOLEN2008} Syntax}\label{table:syntax}
\vspace*{-3mm}
\end{figure}

Listing~\ref{list: joining-follower} shows some of the 
\textsc{GWENDOLEN} code from the Decision-Making Agent for the 
joining procedure for follower vehicle. 
The \textsc{Gwendolen}
syntax that is needed for this example is summarised in
Fig.~\ref{table:syntax}. Essentially, \textsc{Gwendolen} is an
extension of Prolog-style declarative programming, incorporating
explicit representations of goals, beliefs, and plans. For example,
the first plan in Listing~\ref{list: joining-follower} (Line 7-12)
denotes that once the follower agent sets a goal to join the platoon,
it sends a request to the leader and waits for an agreement belief to
become true. The changing lane plan in Listing~\ref{list:
  joining-follower} (Line 14-19) can be executed if and only if the
follower agent has the agreement belief. In each iteration, the
follower agent then selects plan based on its goals and beliefs.

A \textsc{Gwendolen} agent can also perform deductive reasoning on its
beliefs, expressed through its \emph{reasoning rules}. In the
`joining' scenario, the follower agent deduces that its goal to join
the platoon has been achieved if it believes \emph{all} the prescribed
joining steps have been performed. This is represented by the
\emph{platoon-ok} belief.

Essentially, the decision-making agent's activity proceeds in
sequence: the follower has a goal to successfully join the platoon; it
initiates changing lane, if it believes it has received an agreement
from the leader; and the follower achieves the joining goal if it
believes it is in right lane and the automatic speed and steering
controller are enabled. Changing lane is performed manually by a
driver and as long as the speed and steering controllers are not
switched to automatic, driver needs to control speed and steering.

\begin{lstlisting}[language=Prolog, label=list: joining-follower, numbers=right, caption= A Follower Vehicle's code, captionpos= b, basicstyle=\scriptsize]

Reasoning Rules
joining(X, Y):- name(X), platoon-ok

Plans
+! joining(X, Y) [achieve]: {B name(X) , ~B join_agreement(X, Y)}
		<- +!speed_contr(0) [perform], +!steering_contr(0) [perform],
		   .send(leader, :tell, join_req(X, Y)), *join_agreement(X, Y);
	
+! joining(X, Y) [achieve]: {B name(X), B join_agreement(X, Y), 
		~B changed_lane, ~G set_spacing(Z) [achieve]}
		<- +!speed_contr(0) [perform], +!steering_contr(0) [perform],
		   perf(changing_lane(1)), *changed_lane;
			
+! joining(X, Y) [achieve]: {B name(X), B join_agreement(X, Y),
		B changed_lane, ~B speed_contr, ~ B steering_contr, 
		~B closing_enough, ~G set_spacing(Z) [achieve]}
		<- +!speed_contr(1) [perform], *joining_distance;
			
+! joining(X, Y) [achieve]: {B name(X), B join_agreement(X, Y),
		B changed_lane, B speed_contr, ~B steering_contr, 
		B closing_enough, ~G set_spacing(Z) [achieve]}
		<- +!steering_contr(1) [perform]; 			

+! joining(X, Y) [achieve]: {B name(X), B join_agreement(X, Y),
		B changed_lane, ~B speed_cont, ~B steering_contr, 
		B closing_enough, ~G set_spacing(Z) [achieve]}
		<- +!speed_contr(1) [perform], +!steering_contr(1) [perform]; 			
			
+! joining(X, Y) [achieve]: {B name(X), B join_agreement(X, Y),
		B changed_lane, B speed_contr, B steering_contr, 
		B closing_enough, ~B platoon_m, ~G set_spacing(Z) [achieve]}
		<- .send(leader,:tell,message(X,joined_succ),*platoon_m, platoon-ok;										
\end{lstlisting}

Joining the platoon is described in \textsc{Gwendolen} as an
`achievement' goal, meaning that the agent continuously attempts the
plans given in Listing~\ref{list: joining-follower} until it believes
that \emph{platoon-ok} is true. This belief, \emph{platoon-ok},
denotes that not only is the vehicle in the right lane but that its
automatic controllers are enabled. It also determines that the leader
has received an acknowledgement from the vehicle, confirming it has
successfully joined the platoon. Subsequently, the agent deduces from
its reasoning rule (Line 2) that the joining goal has indeed been achieved.



\section{Verification}
\label{Section:Verification}

\subsection{Verification Methodology}
\label{subSection:Verification Methodology}
\noindent We can visualise the overall system as:
\begin{center}
\includegraphics[width=\textwidth]{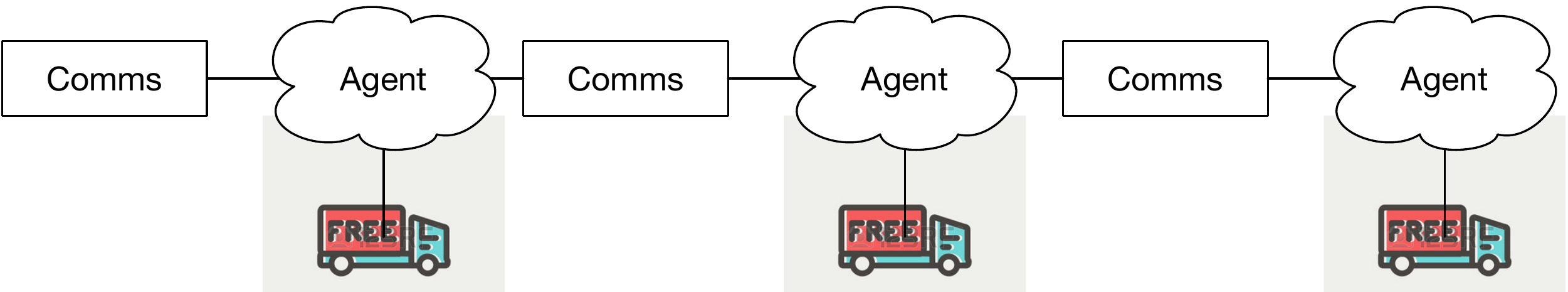}
\end{center}
%
The agent is a \textsc{Gwendolen} program, the $Comms$ component is a simple transfer
protocol, and the vehicle represents the particular vehicular system
that we interact with. This is typically an automotive control system
together with environmental interactions, and we have validated this
both in simulation (using the TORCS automotive simulation) and in
physical vehicles (using Jaguar outdoor rover vehicles).

\begin{paragraph}{Limits to Modelling/Verification.}
We are not going to formally verify the vehicular control systems, and
leave this to standard mathematical (usually analytic) techniques from
the Control Systems field. These control components, for example
involving following a prescribed path, avoiding local obstacles,
keeping distance from object, etc, are well-established and standard.
Instead, we will verify the autonomous \emph{decisions} the vehicles
make, captured within each vehicle's `agent'~\cite{CACM13}. Each agent
represents the autonomous decision-maker within each vehicle and
corresponds, in part, to the human driver's decisions. These decisions
involve deciding \emph{where to go}, \emph{when to turn}, \emph{when
  to stop}, \emph{what to do in unexpected situations}, etc. In the
case of autonomous vehicle convoys/platoons, the agent's (and, hence,
the vehicle's) decisions concern \emph{when to join the convoy}, 
\emph{when to leave}, \emph{what to do in an emergency}, etc.
\end{paragraph}

So, we begin by abstracting from all the vehicle control systems and
environmental interactions, representing these by one (potentially
complex, depending on the vehicle/environment interactions)
automaton. We also use an automaton to describe the simple transfer
protocol that the vehicles use for their communication. In both these
cases we will use \emph{Timed Automata}~\cite{ACD93}. Simplified, our
architecture is:
\begin{center}
\includegraphics[width=\textwidth]{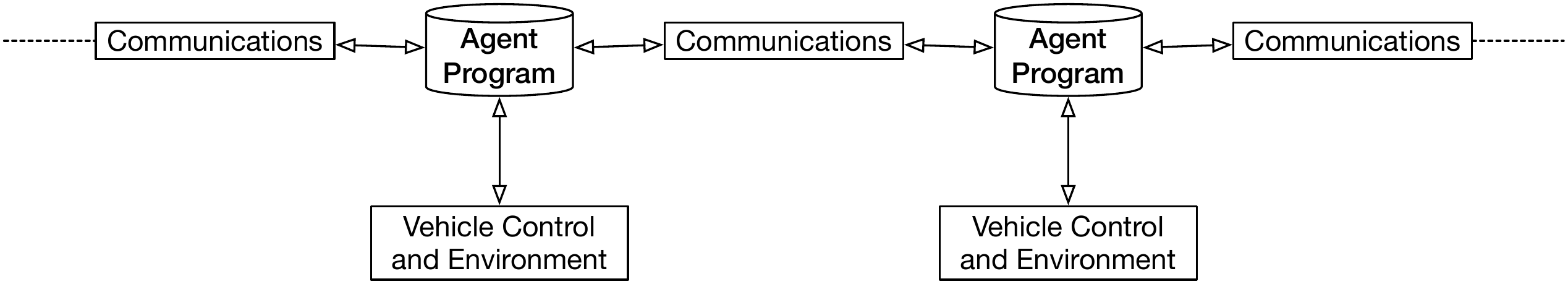}
\end{center}
which, at least in principle, leads to an overarching formal model:
\begin{center}
\includegraphics[width=\textwidth]{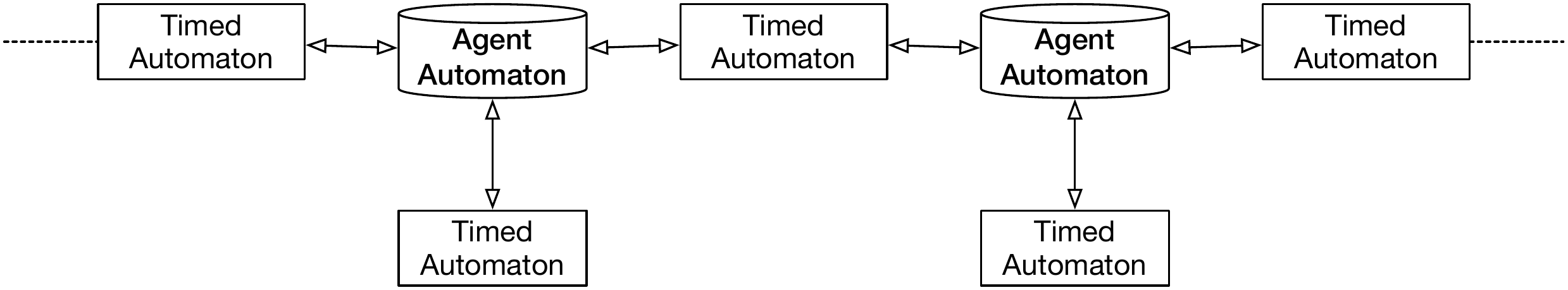}
\end{center}
In describing agent behaviour, the agent automaton comprises added
(modal) dimensions of (at least) \emph{belief} and
\emph{intention}. Thus, the formal structures that allow us to fully
represent all the system above are quite complex, combining timed
relations as well as relations for each of the belief and intention
dimensions~\cite{rao98:JLC,ACD93}. We will not describe this formal
model in detail but just note that it is a
\emph{fusion}~\cite{FingerG96,GKWZ03:book,Kurucz07:hbook} of timed and
BDI structures, $\langle L, A, C, \mathcal{E}, inv, R_B, R_I,
l\rangle$, where: $L$ is a finite set of locations; $A$ is a finite
set of actions; $C$ is a finite set of clocks; $\mathcal{E} \subseteq
L \times \Psi(C) \times A \times 2^{C} \times L$ is a set of (timed)
edges between locations; $inv: L \rightarrow \Psi(C)$ is a function
associating each location with some clock constraint in $\Psi(C)$;
$R_B: Ag\rightarrow (L \times L)$, where $Ag$ is the set of `agents'
and $R_B(a)$ provides the belief relation (corresponding to KD45 modal
logic) for agent $a$ between locations; $R_I: Ag\rightarrow (L \times L)$, where $R_B(c)$
provides the intention relation (KD) for agent $c$ between locations;
and $l : L \rightarrow 2^{AP}$ is a labelling function essentially
capturing those propositions true at each location ($AP$ is a set of
atomic propositions).

The logic then interpreted over such structures
combines~\cite{GKWZ03:book} the syntax of timed temporal logic, for
example $\diamondsuit^{\leq 5} \mathit{finish}$, and the syntax of
modal logics of belief, desire and intention, for example $B_x
\mathit{started}$ (i.e. ``agent $x$ believes that $\mathit{started}$
is true'').
\medskip

\noindent In principle, though very complex, we could provide all our
convoy requirements in such a logic, build structures of the above
form for our convoy implementation, and then develop a model-checking
approach for this combination~\cite{Konur2013}. However, there are
several reasons we choose to abstract and separate the timed/agent
strands, as follows.
\vspace*{-1em}

\begin{itemize}
\itemsep=0pt
\item For \emph{certification} it is important that we verify the
  \emph{actual} agent program used in each vehicle, not a derived
  model of this. Consequently, we utilise a \emph{program model
    checking}~\cite{Visser2003} approach to assess the correctness of
  each agent program. For this formal verification of the agent's
  autonomous decisions, we use AJPF~\cite{Dennis2012}, an extension of
  the Java PathFinder (JPF) program model checker~\cite{JPF:url} for
  \textsc{Gwendolen} that allows verification of belief/intention
  properties.
\item We do not have the detailed implementations of all the
  communications protocol, the vehicular control, and environmental
  interaction, and so use an abstract, formal model to describe these,
  rather than actual code.
\item JPF is an explicit-state program-checker and is relatively slow;
  AJPF builds a BDI programming layer on top of JPF and is at least an
  order of magnitude slower. Consequently, AJPF cannot realistically
  be used for verification of the whole system. In addition, as AJPF
  does not yet have real-time capabilities, then verifying timing
  aspects within AJPF is difficult.
\end{itemize}
While these appear problematic, there are several useful
simplifications in our context:
\begin{itemize}
\itemsep=0pt
\item When verifying autonomous behaviour, the formal
  verification we carry out concerns the interaction of beliefs,
  intentions, etc, \emph{within} each agent. These do not
  extend between agents and so, checking of beliefs, intentions, etc,
  can be localised within each agent.
\item In the requirements to be checked, the timed and BDI formulae
  are quite separate, i.e. $\diamondsuit^{\leq 5}
  \mathit{finish}\ \land\ B_x \mathit{started}$ but never
  $\diamondsuit^{\leq 5} B_x \mathit{started}$ or $B_x
  \diamondsuit^{\leq 5} \mathit{finish}$. As the overall logic is a
  fusion, and since there are no explicit timing constraints
  \emph{within} an agent program (agents have fast internal
  computation), then this allows us to deal with the dimensions
  separately.
\end{itemize}
So, given an overall system, $S$, over which we wish to check
$\varphi$, then we reduce $S\models\varphi$ to two problems:
\vspace*{-1em}

\begin{enumerate}
\itemsep=0pt
\item for each individual agent, $A_i$, within $S$, verify the agent
  properties from $\varphi$, i.e. $\varphi_a$, on the agent within an
  untimed environment (an over-approximation); and
\item verify the timing properties from $\varphi$ i.e. $\varphi_t$, on
  the whole system where the agent program is replaced by an untimed
  automaton describing solely its \emph{input-output} behaviour
  (abstracting from internal BDI reasoning).
\end{enumerate}

\noindent We explain both of these in more detail, before giving the
relevant theorems.

\paragraph{Timed Automaton $\longrightarrow$ Untimed Automaton.} This is 
achieved by the over-approximation as above and then allows us to
verify $V_i'\| Comms'\| A_i\models \varphi_a$ using AJPF:
\begin{center}
\includegraphics[width=\textwidth, height=2.5cm]{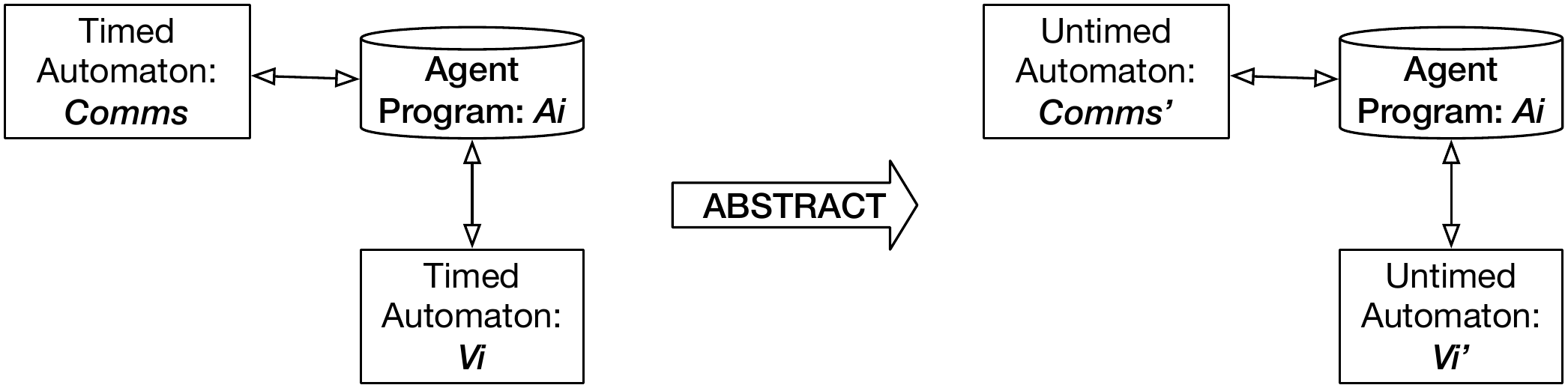}
\end{center}
Note that going from a timed automaton to an untimed one, for example
replacing behaviours such as $\diamondsuit^{\leq 3} receive$ by
$\diamondsuit receive$ provides this over-approximation.  Hence, for
example, $Comms'\models \varphi$ implies $Comms \models \varphi$, but
not necessarily the converse.

\paragraph{Agent Model $\longrightarrow$ Untimed Automaton.} This is
achieved by extracting a model of the agent program's behaviour, then
removing belief/intention aspects from this:
\begin{center}
\includegraphics[width=\textwidth, height=2.5cm]{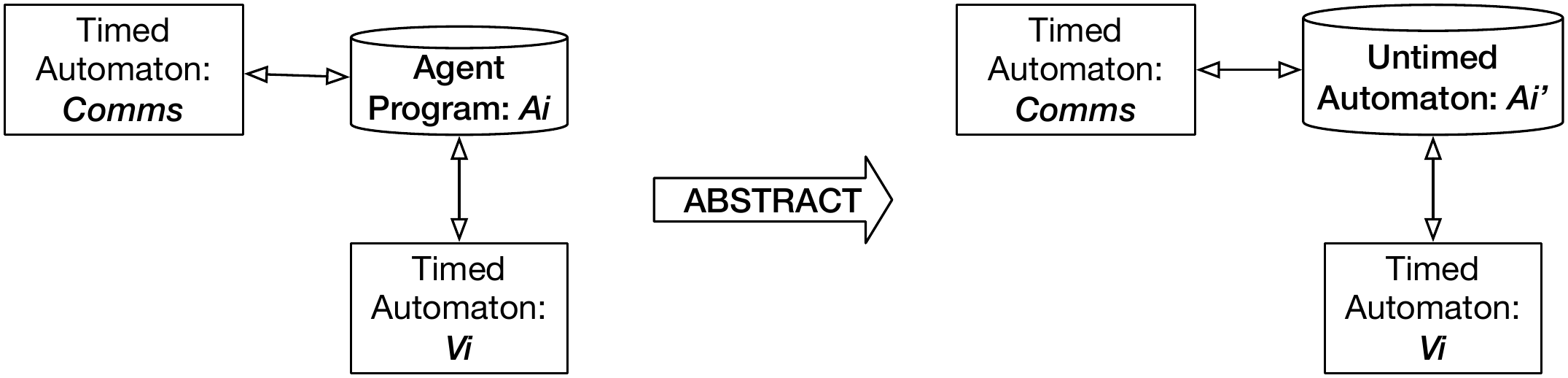}
\end{center}
Formal verification using Uppaal is then carried out on the whole
system with all agents abstracted in this way.  

We now prove important
properties of these abstractions. For simplicity, we assume that $S$
consists of just two agents/vehicles; this result can then easily be
generalised to greater numbers of agents/vehicles.

\begin{thm}
\label{thm1}
Let $S== V_1 \ \| \ A_1 \ \| \ Comms12 \ \| \ A_2 \ \| \ V_2$. If
\begin{itemize}
\itemsep=0pt
\item[a)] $V_1' \ \| \ A_1 \ \| \ Comms12'  \models \varphi_a$\quad and
\item[b)] $V_2' \ \| \ A_2 \ \| \ Comms12' \models \varphi_a$\quad and
\item[c)] $V_1 \ \| \ A_1' \ \| \ Comms12 \ \| \ A_2' \ \| \ V_2 \ \models \varphi_t$.
\end{itemize}
then $S\models\varphi_a\land\varphi_t$.
\end{thm}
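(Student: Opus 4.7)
The plan is to reduce the conjunctive goal $S\models\varphi_a\land\varphi_t$ to the two conjuncts and then discharge each via a separate over-approximation argument, exploiting the fusion structure of the logic described just before the theorem. Since $\varphi_a$ and $\varphi_t$ do not nest modalities across the timed/BDI boundary (the authors note that formulae of the form $B_x\diamondsuit^{\leq 5}\psi$ never occur), the semantics of the fusion distributes over the conjunction, so it suffices to establish $S\models\varphi_a$ and $S\models\varphi_t$ independently.

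For $S\models\varphi_a$, I would first observe that $\varphi_a$ mentions only belief/intention modalities whose relations $R_B, R_I$ are, by the earlier localisation remark, internal to a single agent. Hence evaluating $\varphi_a$ on $S$ amounts to evaluating it, for each relevant $i$, on the projection of $S$ onto $A_i$ together with whatever environment $A_i$ can observe through $V_i$ and $Comms12$. The next step is to argue that $V_i'\|Comms12'$ is a sound over-approximation of the corresponding slice of $S$: stripping clocks (replacing $\diamondsuit^{\leq k}$ by $\diamondsuit$, dropping invariants and guards) can only add runs, so every behaviour of $V_i\|A_i\|Comms12\|A_{3-i}\|V_{3-i}$ relevant to $A_i$'s beliefs/intentions is simulated by a behaviour of $V_i'\|A_i\|Comms12'$. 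Standard simulation-preservation for the universal fragment of BDI+CTL-style logics then lifts hypotheses (a) and (b) to $S\models\varphi_a$.

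For $S\models\varphi_t$, the argument is dual: hypothesis (c) is about the whole system but with each $A_i$ replaced by $A_i'$, the untimed input-output automaton extracted from the Gwendolen program. I would show that $A_i'$ simulates $A_i$ with respect to observable I/O events (since $A_i'$ by construction contains every trace the BDI program can emit, abstracting only internal belief/intention transitions), and that $\varphi_t$ is stated purely over such observable, clocked events and therefore insensitive to the hidden BDI structure. Combining the simulation with the observation that internal agent computation is treated as instantaneous (the second bullet in the separation justification), every timed trace of $S$ is a timed trace of $V_1\|A_1'\|Comms12\|A_2'\|V_2$, and so $\varphi_t$ transfers from the latter to $S$.

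The main obstacle will be the second step, making precise the sense in which the BDI-to-I/O abstraction preserves timed behaviour. One has to be careful that ``agents have fast internal computation'' is genuinely an idealisation that does not introduce spurious traces when composed with the timed $V_i$ and $Comms12$; formally this requires checking that $A_i'$ is a timed-trace over-approximation of $A_i$ when both are viewed as timed automata with unconstrained internal transitions, and that the universal timed temporal fragment in which $\varphi_t$ lives is preserved under such over-approximation. Once that lemma is in place, the theorem follows by conjoining the two transfer results.
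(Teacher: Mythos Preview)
Your proposal is correct and follows essentially the same route as the paper: split into $S\models\varphi_a$ and $S\models\varphi_t$, discharge the first via the untimed over-approximation of $V_i$ and $Comms12$ together with locality of the BDI modalities, and discharge the second by arguing that the $A_i\to A_i'$ abstraction preserves timed behaviour. The only difference is that the paper claims the stronger fact that $A_i$ and $A_i'$ have \emph{identical} timed behaviour (since the agents carry no clocks at all), which makes the transfer immediate and sidesteps the simulation-preservation lemma you flag as the main obstacle.
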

\begin{proof}
Since $V_1'$ and $Comms12'$ are over-approximations, then 
\begin{center}
$V_1' \| A_1 \| Comms12' \models \varphi_a$ implies $V_1 \| A_1 \|
  Comms12 \models \varphi_a$.
\end{center}
Similarly, (b) gives us $V_2 \| A_2 \| Comms12 \models \varphi_a$. As
the agent properties in $\varphi_a$ are local, we can
compose these to give $V_1 \| A_1 \| Comms12 \| A_2 \|
V_2\models\varphi_a$ and so $S\models\varphi_a$.

By (c) we know that $V_1 \| A_1' \| Comms12 \| A_2' \| V_2 \models
\varphi_t$ yet, as $A_1$ and $A_2$ have no timed behaviour to begin
with, we know that $A_1'$ and $A_2'$ give us exactly the same timed
behaviours. Consequently, $V_1 \| A_1 \| Comms12 \| A_2 \| V_2 \models
\varphi_t$ and so $S\models\varphi_t$.

These two together give us $S\models\varphi_a\land\varphi_t$.
\end{proof}

\begin{thm}
If $V_1 \| A_1 \| Comms12 \| A_2 \| V_2\models \varphi_t$
then $V_1 \| A_1' \| Comms12 \| A_2' \| V_2 \models \varphi_t$.
\end{thm}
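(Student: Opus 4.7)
The plan is to reduce the claim to a trace-equivalence argument: replacing $A_i$ by $A_i'$ does not alter the timed observable behaviour of the composite system, and so any timed temporal property $\varphi_t$ (which depends only on that behaviour) transfers from one composition to the other. This is essentially the ``converse half'' of the trace-containment implicit in Theorem~\ref{thm1}, and it hinges on the earlier standing assumption that the agents have no intrinsic timing.

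The first step is to isolate what $\varphi_t$ actually observes. Since, by the fusion argument given earlier, $\varphi_t$ contains only timed temporal operators applied to atomic propositions drawn from the interface events of the system (messages over $Comms12$, commands to the vehicles, sensor readings, etc.), its satisfaction depends exclusively on the timed sequences of such events, not on the internal belief/intention state of any $A_i$. So it suffices to show that $V_1 \| A_1 \| Comms12 \| A_2 \| V_2$ and $V_1 \| A_1' \| Comms12 \| A_2' \| V_2$ produce the same timed traces over these interface events.

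The second step is to justify that the replacement preserves these traces. By construction, $A_i'$ is the untimed automaton capturing the input/output behaviour of $A_i$ with the BDI layer projected away. Because $A_i$ itself carries no clocks and no timing guards (agent computation is treated as instantaneous), every timed transition in a run of the full system is forced by $V_i$ or by $Comms12$; $A_i$ contributes only the \emph{choice} of which output event, and $A_i'$ was defined precisely to reproduce those choices. A straightforward (timed) bisimulation argument between $A_i$ and $A_i'$, lifted through parallel composition with $V_i$ and $Comms12$, then gives matching timed traces in both directions. In particular, every timed trace of the concrete system is also a timed trace of the abstracted one, so $\varphi_t$ holding on the former gives $\varphi_t$ on the latter.

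The main obstacle, as with Theorem~\ref{thm1}, is really a definitional one: $A_i'$ has only been described informally as ``the untimed automaton describing solely its input-output behaviour'', and the argument relies on reading this as being faithful to $A_i$ on externally visible events. Once that reading is fixed (i.e.\ $A_i'$ and $A_i$ are trace-equivalent on the interface alphabet and neither introduces delays), the compositionality of timed trace equivalence under $\|$ does the rest, and no case analysis on $\varphi_t$ is needed beyond noting that it is interpreted over those interface traces alone.
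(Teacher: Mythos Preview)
Your proposal is correct and follows essentially the same approach as the paper: the paper's proof simply observes that since the timing behaviour of each $A_i$ is identical to that of $A_i'$, the two composite systems are equivalent, and hence $\varphi_t$ transfers. Your version is a more carefully articulated elaboration of that one-line argument, spelling out the trace-equivalence and compositionality that the paper leaves implicit.
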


\begin{proof}
Since the timing behaviour of each $A_i$ is identical to each $A_i'$ then 
$V_1 \| A_1' \| Comms12 \| A_2' \| V_2$ and $V_1 \| A_1 \|
Comms12 \| A_2 \| V_2$ are equivalent.
\end{proof}

\subsection{Individual Agent Verification using AJPF}

To verify agent properties, we use the AJPF model checker on our
agent, written in the \textsc{Gwendolen} language, as above. For
instance, we verify that:

\begin{quote}
\emph{If a vehicle never believes it has received confirmation from the leader, then it never initiates joining to the platoon.} 
\end{quote}
%

This safety property corresponds to the first requirement of joining a
platoon, as given in Section~\ref{Section:Automotive platoon}, and can
be defined as:

\begin{small}
\begin{equation} \label{eq:prop0}
\begin{split}
\Box\ &{\tt \left( G_{\ f3} \ platoon\_ m\ (f3, f1)\ \right. }\\ 
&{\tt \rightarrow}
{\tt \ \lnot D_{f3} \ perf(changing\_ lane(1)) \ \mathit{W}\ }
{\tt \left. B_{f3} \ join\_ agr\ (f3, f1) \right) }
\end{split}
\end{equation}
\end{small}

\noindent Here $G_{x} \ y $ stands for a goal $y$ that agent $x$ tries
to achieve, $B_{x} \ z$ stands for a belief $z$ of agent $x$, and
$D_{x} \ k$ stands for an action $k$ that agent $x$ performs/does. The
standard LTL operators, such as $\Box$ meaning ``always in the
future'' and $\mathit{W}$ meaning ``unless'', are used. An instance of
the above , where the agent \emph{never} receives a join agreement,
is:

\begin{small}
\begin{equation} \label{eq:prop1}
\begin{split}
&{\tt  \Box\ \left( G_{\ f3} \ platoon\_ m\ (f3, f1)\  \& \right. } 
{\tt \lnot B_{\ f3} \ join\_ agr\ (f3, f1))}\\
&{\tt \rightarrow \ }
\Box {\tt \left. \lnot D_{\ f3} \ perf(changing\_ lane(1) \right)}\\
\end{split}
\end{equation}
\end{small}

\noindent To be able to check such a property, incoming
perceptions/communications should be provided. We supply two automata:
$Comm'$, representing communication to/from the other agents; and
$V'_i$, representing vehicle responses to agent actions. Under this
configuration, we were able to carry out the agent verification in
around 12 hours.

We have verified a range of safety and liveness properties and we
provide some joining/leaving examples below. Note that the following
properties can also be similarly expressed in terms of the weak until
operator, $\mathit{W}$; however, we denote a particular instance of
these properties for the sake of brevity.
\begin{quote}
\emph{If a vehicle ever sends a `join' request to the leader and
  eventually receives the join agreement and it is not already in the
  correct lane, it initiates `joining' the platoon by performing
  ``changing lane''.}
\end{quote}

\begin{small}
\begin{equation} \label{eq:prop2}
\begin{split}
&{\tt \left(\ G_{\ f3} \ platoon\_ m\ (f3, f1)\ \& \ \lnot B_{\ f3} \ changed\_lane \ \&  \right. } \\
&{\tt \Box\ ItD_{\ f3} send (leader, tell, \ message(f3,1,f1)) } 
{\tt \left. \rightarrow   \lozenge \ B_{\ f3} \ join\_agr\ (f3, \  f1) \right)}\\
&{\tt \rightarrow}
{\tt \ \lozenge \  D_{\ f3} \ perf(changing\_ lane(1))}\\
\end{split}
\end{equation}
\end{small}

\noindent Property~\ref{eq:prop2} is a liveness property ensuring that
eventually (using the LTL $\lozenge$ operator) the joining procedure
initiates the changing lane control system once its condition is
fulfilled. Similarly, we can verify other properties to show progress such as 
eventually the speed and steering controllers are switched to automatic if pre-conditions
hold. Other verified properties ensuring safe operation of the
platoon are as follows.
\begin{quote}
\emph{If a vehicle never believes it has changed its lane, then it
  never switches to the automatic speed controller.}
\end{quote}

\begin{small}
\begin{equation} \label{eq:prop3}
\begin{split}
&{\tt\Box\  \left(\ G_{\ f3} \ platoon\_ m\ (f3, f1)\ \& \right. }
{\tt \left. \lnot B_{\ f3} \ changed\_ lane \ \right) } \\
&{\tt \rightarrow \ }
\Box {\tt \ \lnot D_{\ f3} \ perf(speed\_ controller(1))}\\
\end{split}
\end{equation}
\end{small}

\begin{quote}
\emph{If a vehicle never believes it has received a confirmation from
  the leader, then it never switches to the automatic speed
  controller.}
\end{quote}

\begin{small}
\begin{equation} \label{eq:prop4}
\begin{split}
&{\tt \Box\ \left(\ G_{\ f3} \ platoon\_ m\ (f3, f1)\ \& \right. } 
{\tt \left. \lnot B_{\ f3} \ join\_ agr\ (f3, f1)\ \right) }\\
&{\tt \rightarrow}
{\tt \ \Box \ \lnot D_{\ f3} \ perf(speed\_ controller(1))}\\
\end{split}
\end{equation}
\end{small}

\begin{quote}
\emph{If a vehicle never believes it is sufficiently close to the
  preceding vehicle, it never switches to the automatic steering
  controller.}
\end{quote}

\begin{small}
\begin{equation} \label{eq:prop5}
\begin{split}
&{\tt \Box\ \left(\ G_{\ f3} \ platoon\_ m\ (f3, f1)\ \&\ \right.}
{\tt \left. \lnot B_{\ f3} \ joining\_ distance\ \right)} \\
&{\tt \rightarrow \ }
\Box {\tt \ \lnot D_{\ f3} \ perf(steering\_ controller(1))}\\
\end{split}
\end{equation}
\end{small}

\begin{quote}
\emph{If a vehicle never believes it has received a confirmation from
  the leader to leave the platoon, i.e., increasing spacing has been 
  achieved, then it never disables its
  autonomous control.}
\end{quote}
%
Note that the leader sends back the `leave' agreement to
\emph{follower3} if, and only if, it received an acknowledgement
from \emph{follower3} showing that spacing has been increased.

\begin{small}
\begin{equation} \label{eq:prop6}
\begin{split}
&{\tt \Box\ \left(G_{\ f3} \ leave\_ platoon\ \&\ \right.} 
{\tt \left. \lnot B_{\ f3} \ leave\_ agr\ (f3) \right) }\\
&{\tt \rightarrow \ }
{\tt \Box\ \lnot D_{\ f3} \ perf(speed\_ controller(0))}\\
\end{split}
\end{equation}
\end{small}

\noindent It is important to recall that perceptions and
communications coming in to the agent are represented as internal
beliefs. Hence the proliferation of belief operators. The AJPF program
model checker explores all possible combinations of shared beliefs and
messages and so, even with the relatively low number of perceptions
above, the combinatorial explosion associated with exploring all
possibilities is very significant. Therefore, verifying the whole
multi-agent platooning system using AJPF is infeasible.

To verify the global properties of multi-agent platooning,
we use a complementary approach. We manually generate a model 
of the whole system as timed-automata and use the Uppaal model checker to establish the (timed) correctness of multi-agent platooning.
%
In the following, we review the relevant timed-automata and highlight some of the global safety properties of vehicle platooning that have been verified using Uppaal.

\subsection{Timed Automata Model of Automotive Platoons}
We model vehicle platooning in Uppaal as a parallel composition of
identical processes describing the behaviour of each individual
vehicle in the platoon along with an extra process describing the
behaviour of the platoon leader (the \emph{leader} automaton). Each of
these vehicle processes is a parallel composition of two timed
automata, \emph{vehicle} and \emph{agent}. The \emph{agent} automaton,
in turn, comprises both $Comms$ and $A'_i$ components, as given in
Section~\ref{subSection:Verification Methodology} . 

The vehicle automaton supplies incoming perceptions for the agent
automaton. It describes the sensor models and action execution. The
vehicle automaton receives, and responds to, the action commands of
the corresponding agent through three pairs of binary channels
modelling \emph{change-lane}, \emph{set-space} and
\emph{join-distance} commands and responses. To model timing
behaviour, we define a clock for the vehicle automaton which models
the time assessments for ``changing lane'', ``setting space'' and
``joining distance'' actions. Based on engineers' study,
actions \emph{change-lane}, \emph{set-space} and \emph{join-distance}
take $20\pm 5$, $10\pm 5$ and $10\pm 5$ seconds, respectively.


The agent automaton models an \emph{abstracted} version of the
\textsc{Gwendolen} agent by excluding all internal computations of
the agent. The overall structure of an agent consists of 5 regions, shown
in Fig.~\ref{fig:agentAutomaton}. If the automaton is in the
\emph{IDLE} region, which consists of only one location, then the
agent does not perform any action at that moment. The regions
\emph{JOIN}, \emph{LEAVE}, \emph{SET-SPACE} and \emph{SW-STEERING}
represent the sequence of necessary communications with other agents
(and the vehicle) in order to achieve the agent's goals. If a vehicle is
part of the platoon it can leave the platoon or receive messages from
the leader to set spacing or switch steering controller. If a vehicle
is not part of the platoon, it can only join the platoon. Each agent
automaton contains two binary channels \emph{join-r[i][0]} and
\emph{leave-r[i][0]} to model the unicast sending of `join' and
`leave' requests to the leader and two binary channels
\emph{joined-suc[i][0]} and \emph{left-suc[i][0]} to model the unicast
sending of `join' and `leave' \emph{acknowledgements} to the
leader. These channels are used to model the message passing between
the following agents and the leader, modelled in decision-making agent 
(Section.~\ref{Section:Agent-based Development of Automotive Platoon}). 
Furthermore, each agent automaton also contains 
channels to send commands to its vehicle and receive acknowledgements from
its vehicle. The agent automaton has a clock \emph{process-time} that is used
to model the time consumption for achieving goals.


\begin{figure}[hptb]
\centering
\renewcommand{\figurename}{Fig.}
\includegraphics[width=\textwidth]{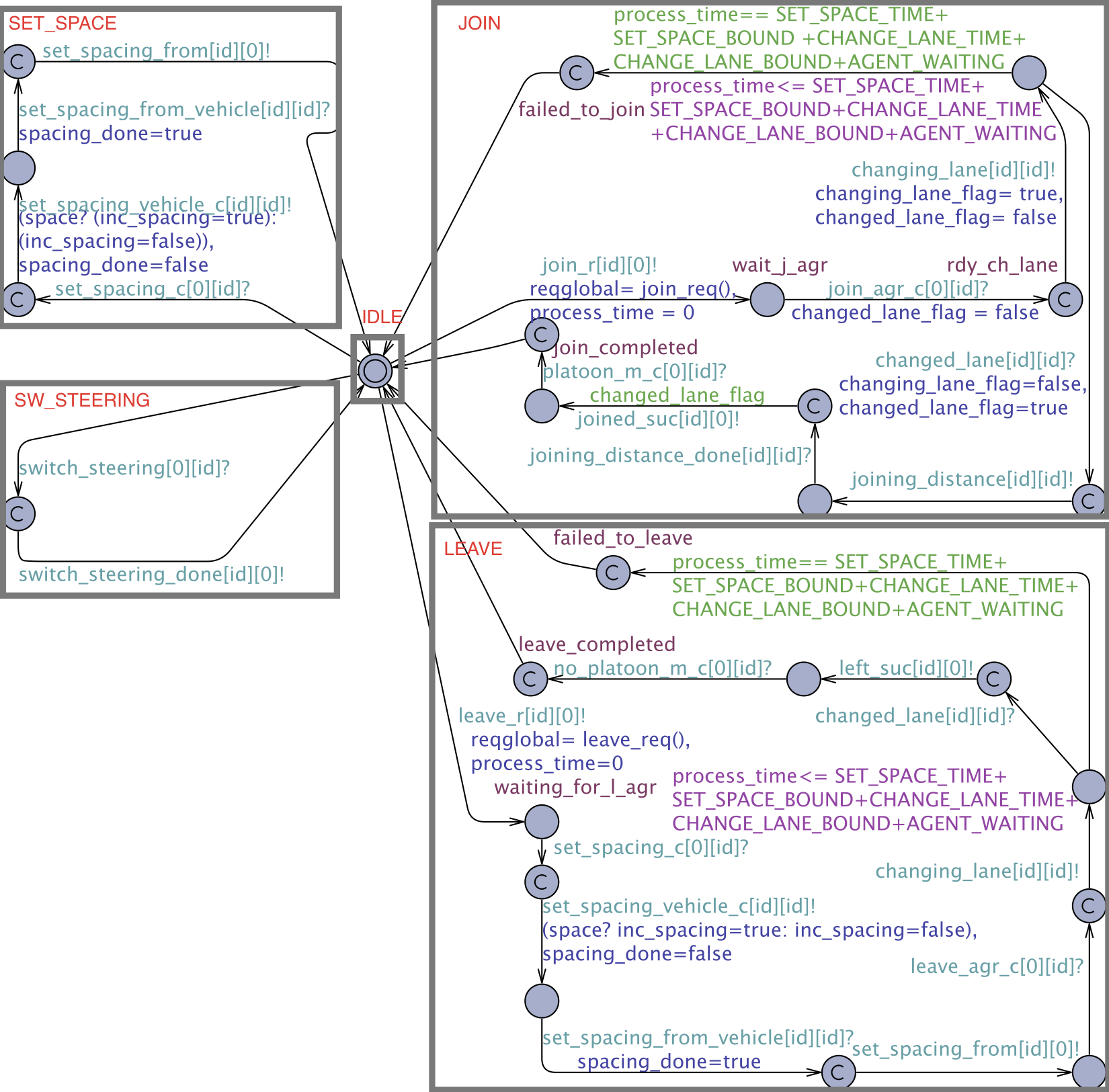}
\caption{ The Uppaal \emph{Agent} Automaton}
\vspace*{-2mm}\label{fig:agentAutomaton}
\end{figure}

Next, we define a leader automaton to model the external
behaviour of the leader agent (Fig.~\ref{fig:leaderAutomaton}), where
the coordination between agents is handled through unicast
synchronisation channels. Upon receipt of a joining request,
i.e.,~\emph{join-r[i][0]!}, it sends a ``set spacing'' command to the
preceding agent where the requested agent wants to be placed. The
leader sends a joining agreement, i.e.,~\emph{join-agr-c[0][i]?}, to
the requested agent, if it has successfully set spacing between the
two vehicles where the requested vehicle will be placed. Follower $i$
synchronises with the leader via~\emph{join-agr-c} channel. Then the
leader waits for an acknowledgement from the requested agent. It waits
for at most the upper bound time for setting space, changing lane and
getting close enough to the front vehicle. Upon receipt of the
acknowledgement, the leader sends a confirmation to the agent and a
``set spacing'' command to the preceding agent to decrease its space
with the front vehicle to complete the joining procedure. If it does
not receive the acknowledgement in time, it sends a ``set spacing''
command to the preceding agent to decrease its space and waits for a
spacing acknowledgement then goes back to the $idle$ location, ready
for the next request. The leader communicates with the agents through
synchronisation channels. It passes messages to the follower through
channels dedicated to the agreements, setting space and switching
steering controller. For simplicity, we assume the leader handles only
one request at any time.


\begin{figure}[h]
\vspace{-2mm}
\centering
\renewcommand{\figurename}{Fig.}
\includegraphics[width=\textwidth,height=80mm]{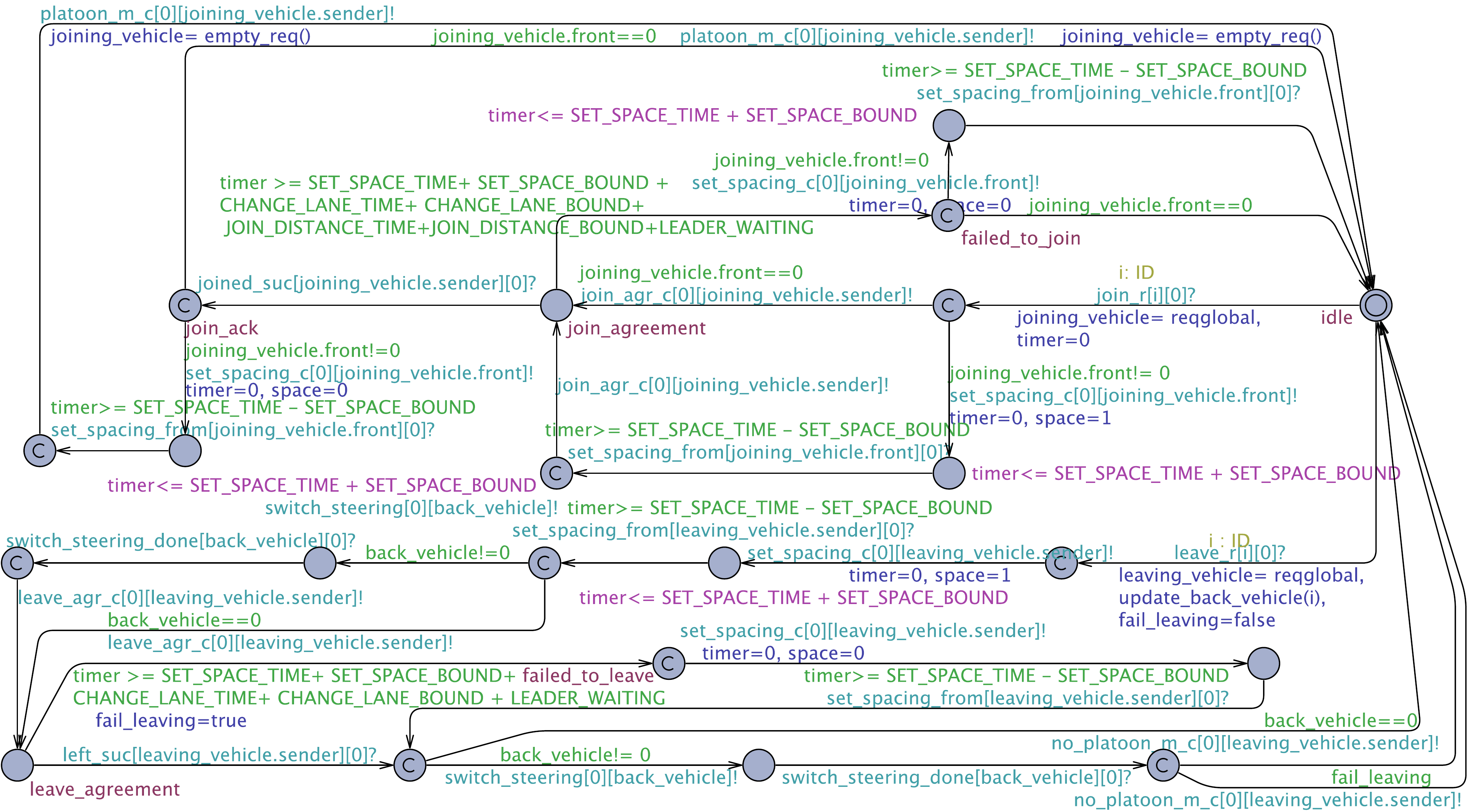}
\caption{ The \emph{leader} Automaton}
\vspace*{-2mm}\label{fig:leaderAutomaton}
\end{figure}

\subsection{Multi-agent Platooning Verification using Uppaal}
%

Now we have timed automata representations of the platoon, we can
carry out verification of their properties using Uppaal. For
simplicity, we analyse the global and timing properties of a
multi-agent platoon composed simply of a leader and three vehicles
(with three corresponding) agents. We assume vehicles can always set
spacing and joining distance in time, i.e., $10\pm 5$, but can fail to
change lane in time, i.e., less than $20 + 5$. In the following, we
first give examples of global properties involving the coordination
between the leader and the followers. Second, we evaluate timing
requirements: the safe lower and upper bounds for joining and leaving
activities. We observed that the verification of these properties took 
less than 3 seconds using Uppaal.

If an agent ever receives a joining agreement from the leader, then
the preceding agent has increased its space to its front agent. This
property is formulated for agent \emph{a3} as follows (\textbf{A}
represents ``on all paths''):

\begin{small}
\begin{equation} \label{eq:prop8}
\begin{split}
&{\tt \textbf{A} \Box\ \left( ( a3.rdy\_ch\_lane \ \textbf{\&\&} \ l. joining\_vehicle.front == 2)\  \right. }\\
&{\tt \left. \textbf{imply} \ (\ a2.incr\_spacing \ \textbf{\&\&} \ \ a2.spacing\_done) \right)}
\end{split}
\end{equation}
\end{small}

\noindent where \emph{a3} is the agent which is in the
$rdy\_ch\_lane$ location, i.e, the agent has received a joining
agreement, variable $joining\_vehicle.front$ indicates the
identification of the preceding agent, flag $a2.incr\_spacing$ models
that the preceding agent has received an ``increase space'' command
from the leader and, finally, flag $a2.spacing\_done$ models whether
agent \emph{a2} has successfully increased its space. We can also
verify this property for agents \emph{a2} and
\emph{a4}. Property~\ref{eq:prop8} is a safety requirement ensuring
that a vehicle initiates ``changing lane'' only if sufficient spacing
is provided.

The next property of interest is whether a joining request always ends
up increasing space of the preceding vehicle. To express this
property, we use the \emph{leads to} property form, written $\varphi
\rightsquigarrow \psi $. It states that whenever $\varphi$ is
satisfied, then eventually $\psi$ will be satisfied. Such properties
are written as $\varphi \dashrightarrow \psi$ in Uppaal. We verify the
property~(\ref{eq:prop9}) to show that whenever agent \emph{a3} is in
the \emph{wait\_ j\_ agr} location, i.e., has sent a joining
request, and agent \emph{a2} is the preceding vehicle in the platoon,
then eventually \emph{a3} will receive an increasing space command and
will perform the action.

\begin{small}
\begin{equation} \label{eq:prop9}
\begin{split}
&{\tt ( a3.wait\_j\_agr\ \textbf{\&\&} \ l. joining\_vehicle.front == 2)\  }\\
&{\tt \dashrightarrow (\ a2.incr\_spacing \ \textbf{\&\&} \ \ a2.spacing\_done) }
\end{split}
\end{equation}
\end{small}

\noindent To ensure that the spacing always decreases after a joining
procedure, i.e., platoon returns back to a normal state, we verify
that if ever the leader receives a joining request, it eventually
sends a decreasing space command to the preceding agent unless the
joined agent is the final one in the platoon.

\begin{small}
\begin{equation} \label{eq:prop10}
\begin{split}
&{\tt \textbf{A} \Box\ \left( ( a3.join\_completed \ \textbf{\&\&} \ l. joining\_vehicle.front == 2)\ \right. } \\
&{\tt  \left. \textbf{imply} \ (\ !a2.incr\_spacing \ \textbf{\&\&} \ \ a2.spacing\_done) \right) }
\end{split}
\end{equation}
\end{small}

\noindent Given the required time for a vehicle to carry out ``set
spacing'', ``joining distance'' and ``changing lane'' tasks, we are
interested in verifying if an agent accomplishes joining the
platoon within an expected interval: waiting time for agreement + changing
lane + joining distance + waiting time for leader confirmation,
represented in Property~\ref{eq:prop11}.

\begin{small}
\begin{equation} \label{eq:prop11}
\begin{split}
&{\tt \textbf{A} \Box\  \left( a2.join\_completed \ \ \textbf{imply} \right. }\\
 &{\tt \left. (a2.process\_time\geq50  \ \textbf{\&\&} \ a2.process\_time\leq90) \right) }
\end{split}
\end{equation}
\end{small}

\noindent Similarly, we check if an agent leaves a platoon within an
expected interval: waiting time for agreement + changing lane +
waiting time for leader confirmation. Waiting time for agreement is
equal to the time needed to set space and waiting time for leader
confirmation is zero because we assume switching
steering controllers is immediate.

\begin{small}
\begin{equation} \label{eq:prop12}
\begin{split}
&{\tt \textbf{A} \Box\  \left( a2.leave\_completed \ \ \textbf{imply} \right. }\\
 &{\tt \left. (a2.process\_time\geq30  \ \textbf{\&\&} \ a2.process\_time\leq50) \right)}
\end{split}
\end{equation}
\end{small}



\section{Concluding Remarks}
\label{Section:Conclusions}

The verification of safety considerations for automotive platooning is
quite complex and difficult. There are several reasons for this.

\begin{itemize}
\itemsep=0pt
\item These are non-trivial hybrid autonomous systems, with each
  vehicle mixing feedback controllers and agent decision-making.
\item There is a strong requirement to verify the \emph{actual} code
  used in the implementation, rather than extracting a formal model of
  the program's behaviour --- this leads on to \emph{program}
  model-checking, which is resource intensive.
\item There are no other practical systems able to model check
  temporal and modal properties of complex BDI agents --- thus we are
  led to AJPF.
\item AJPF is \emph{very} resource intensive (as we have seen, 12
  hours for some agent properties) and cannot be practically used to
  verify whole system properties of automotive platooning.
\item Especially when interacting with real vehicles we need to verify
  timed properties.
\end{itemize}
Thus it is perhaps not surprising that such formal verification has
never been reported before. Safety verification of platooning in the contorl
level was investigated extensively~\cite{PathProject1998, Hilscher2013}. 
A combined verification approach for vehicle platooning is proposed 
in~\cite{colin2009} where the system behaviour is specified in CSP and B 
formal methods. A compositional verification approach for vehicle 
platooning is introduced in~\cite{Zaher2012} where feedback controllers and agent 
decision-making are mixed.

In order to address all of the above concerns, we have adopted a twin
strategy. We use AJPF to verify individual agent properties, given
realistic abstractions of environmental interactions. We then abstract
from the BDI code and produce an abstract agent automaton suitable for
use in Uppaal verification. This then allows us to formally verify
platoon requirements and safety considerations, a sample of which we
have included.

It must be emphasised that the agent code that we verify is actually
the code that controls the vehicle both in the TORCS simulation and in
the real vehicle that we are developing. Thus, as long as the environmental 
abstractions are correct, we can be sure of the decisions made by the agent.

\paragraph{Future Work.} There is clearly much future work to tackle. 
An obvious one is to continue efforts to improve the efficiency of AJPF.

Maintaining a safe platoon in case of \emph{recoverable} latency and
dissolving a platoon in the case of \emph{unrecoverable} latency are
two procedures that are not implemented in our verified agent code due
to a shortcoming of AJPF. Adding these two procedures to the agent
grows the system space to the extent that AJPF fails to verify any
property. Thus, we are investigating an agent abstraction at the level
of goals, beliefs and intentions in order to use AJPF for verification
of more complex agents.

Since we are concerned with certification of automotive platooning in
practice, we are aiming to extract a more comprehensive list of formal
properties from official platoon requirement documents. Related to
this, we are also in the process of porting the agent architecture on
to a real vehicle and so testing the platooning algorithms in
physical, as well as just simulation, contexts.

Finally, an important aspect of our two pronged strategy is to link
the models used in Uppaal to the programs that AJPF uses. In this
paper we generated the Uppaal models by hand, extracted temporal
formulae to capture their (non-timed) behaviour, and then verified
these temporal formulae on the agent code. This at least shows that
the timed automata we built correspond to the agent code execution. We
believe that all of this can be automated. In particular, we plan to
use the AJPF framework to explore the agent code executions and so
automatically build up the automaton that Uppaal can
use~\cite{TwoStage15}. This would give a much stronger form of
completeness and would improve efficiency.


\newpage
\bibliographystyle{abbrv}
\bibliography{cav16} 

\end{document}